\documentclass{article}
\usepackage{spconf,amsmath,amsthm,amssymb}
\usepackage{microtype}
\usepackage{graphicx,epsfig,url}
\usepackage[linesnumbered,boxed]{algorithm2e}
\usepackage[caption=false,font=footnotesize]{subfig}
\usepackage[usenames, dvipsnames]{color}

\title{Fast Bilateral Filtering of Vector-Valued Images}
\name{Sanjay Ghosh and Kunal N. Chaudhury
\thanks{Correspondence: kunal@ee.iisc.ernet.in. This work was supported by a Startup Grant from the Indian Institute of Science.}
}
\address{Department of Electrical Engineering, Indian Institute of Science.}

\newtheorem{theorem}{Theorem}[section]
\newtheorem{proposition}[theorem]{Proposition}

\def\X{\mathbf{X}}
\def\Y{\mathbf{Y}}
\def\x{\boldsymbol{x}}
\def\y{\boldsymbol{y}}
\def\f{\boldsymbol{f}}
\def\g{\boldsymbol{g}}

\def\i{\boldsymbol{i}}
\def\j{\boldsymbol{j}}

\begin{document}

\ninept

\maketitle

\begin{abstract}
In this paper, we consider a natural extension of the edge-preserving bilateral filter for vector-valued images. 
The direct computation of this non-linear filter is slow in practice. 
We demonstrate how a fast algorithm can be obtained by first approximating the Gaussian kernel of the bilateral filter using raised-cosines, and then using Monte Carlo sampling.
We present simulation results on color images to demonstrate the accuracy of the algorithm and the speedup over the direct implementation. 
\end{abstract}

\begin{keywords}
Bilateral filter, vector-valued image, color image, Monte Carlo method, approximation, fast algorithm.
\end{keywords}

\section{Introduction}

The bilateral filter was proposed by Tomasi and Maduchi \cite{Tomasi1998} as a non-linear extension of the classical Gaussian  filter. It is an instance of an edge-preserving filter that can smooth homogenous regions, while preserving sharp edges at the same time. The bilateral filter has diverse applications in image processing, computer vision, computer graphics, and computational photography \cite{Book2009}. We refer the interested reader to \cite{Tomasi1998,Book2009} for a comprehensive survey of the working of the filter and its various applications.

The bilateral filter uses a spatial kernel along with a range kernel to perform edge-preserving smoothing. Before proceeding further, we introduce the necessary notation and terminology.
Let $\f : \Omega \rightarrow \mathbb{R}^d$ be a vector-valued image, where $\Omega$ is some finite rectangular domain of $\mathbb{Z}^2$. For example, $d=3$ for a color image.
Consider the kernels  $\omega : \mathbb{Z}^2 \rightarrow \mathbb{R}$ and $\phi : \mathbb{R}^d \rightarrow \mathbb{R}$ given by
\begin{equation}
\label{kernel}
\omega (\i) = \exp \left(- \frac{|| \i ||^2}{2 \sigma_s^2} \right) \ \text{ and } \ \phi(\x) = \exp \Big(- \frac{1}{2} \x^T \mathbf{C}^{-1} \x \Big),
\end{equation}
where $\sigma_s > 0$ and $\mathbf{C}$ is some positive definite covariance matrix.
The former bivariate Gaussian is called the spatial kernel,  and the latter multivariate Gaussian is called the range kernel \cite{Tomasi1998}. The output of the bilateral filter is the 
vector-valued image $\mathcal{B}[\f]: \Omega \rightarrow \mathbb{R}^d$ given by
\begin{equation}
 \label{BF}
 \mathcal{B}[\f](\i) = \frac{1}{Z(\i)} \sum_{\j} \omega (\j) \  \phi\! \left( \f(\i-\j) - \f(\i) \right) \f(\i-\j),
\end{equation}
where 
\begin{equation}
\label{normal}
Z(\i)=\sum_{\j} \omega (\j)\ \phi\! \left( \f(\i-\j) - \f(\i) \right).
\end{equation}
The bilateral filter was originally proposed for processing grayscale and color images corresponding to $d=1$ and $d=3$ respectively \cite{Tomasi1998}.

In practice, the sums in \eqref{BF} and  \eqref{normal} are restricted to a square window $[-3\sigma_s,3\sigma_s]^2$ around the pixel of interest, where $\sigma_s$ is the standard deviation of the spatial Gaussian \cite{Tomasi1998}. Thus, the direct computation of \eqref{BF} and  \eqref{normal} requires $O(\sigma_s^2)$ operations per pixel. In fact, the direct implementation is known to be slow for practical settings of $\sigma_s$ \cite{Book2009,Paris2006}. For the case $d=1$ (grayscale images), researchers have come up with several  fast algorithms based on various forms of approximations \cite{Paris2006,Weiss2006,Porikli2008,Yang2009,Chaudhury2011,Kamata2015,Chaudhury2016}. A detailed account of some of the recent fast algorithms, and a comparison of their performances, can be found in \cite{Kamata2015}. A straightforward way of extending the above fast algorithms to vector-valued images is to apply the algorithm separately on each of the $d$ components. The output in this case will generally be different from that obtained using the formulation in \eqref{BF}. In this regard, it was observed in \cite{Tomasi1998,Paris2006} that the component-wise filtering of RGB images can often lead to color  distortions. It was shown that such distortions can be avoided by applying \eqref{BF} in the CIE-Lab color space, where the covariance $\mathbf{C}$ is chosen to be diagonal.

In this paper, we present a fast algorithm for computing \eqref{BF}. The core idea is that of using raised-cosines to approximate the range kernel $\phi(\x)$. This approximation  was originally proposed in \cite{Chaudhury2011} for deriving a fast algorithm for gray-scale images. It was later shown in \cite{Chaudhury2011a,Chaudhury2013} that the raised-cosine approximation can be extended for performing high-dimensional filtering using the product of one-dimensional approximations. Unfortunately, this did not lead to a practical fast algorithm. The fundamental difficulty in this regard is the so-called ``curse of dimensionality''. Namely, while a raised-cosine of small order, say $10$, suffices to approximate a one-dimensional Gaussian, the product of such approximations result in an order of $10^d$ in dimensions $d>1$. A similar bottleneck arises in the context of computing \eqref{BF} using the raised-cosine approximation. Nevertheless, we will demonstrate how this problem can be circumvented using Monte Carlo approximation \cite{DW2005}. 

The contribution and organization of the paper are as follows. In Section \ref{PA}, we extend the shiftable approximation in \cite{Chaudhury2011} for the bilateral filtering of vector-valued images given by \eqref{BF}. In this direction, we propose a stochastic interpretation of the raised-cosine approximation, and show how it can be made practical using Monte Carlo sampling. Based on this approximation, we develop a fast algorithm  in Section \ref{FA}. As an application, we use the proposed algorithm for filtering color images in Section \ref{results}. The results reported in this section demonstrate the accuracy of the approximation, and the speedup achieved over the direct implementation. We conclude the paper in Section \ref{conclusion}.


 \begin{figure}
 \centering
  \includegraphics[width=0.45\textwidth]{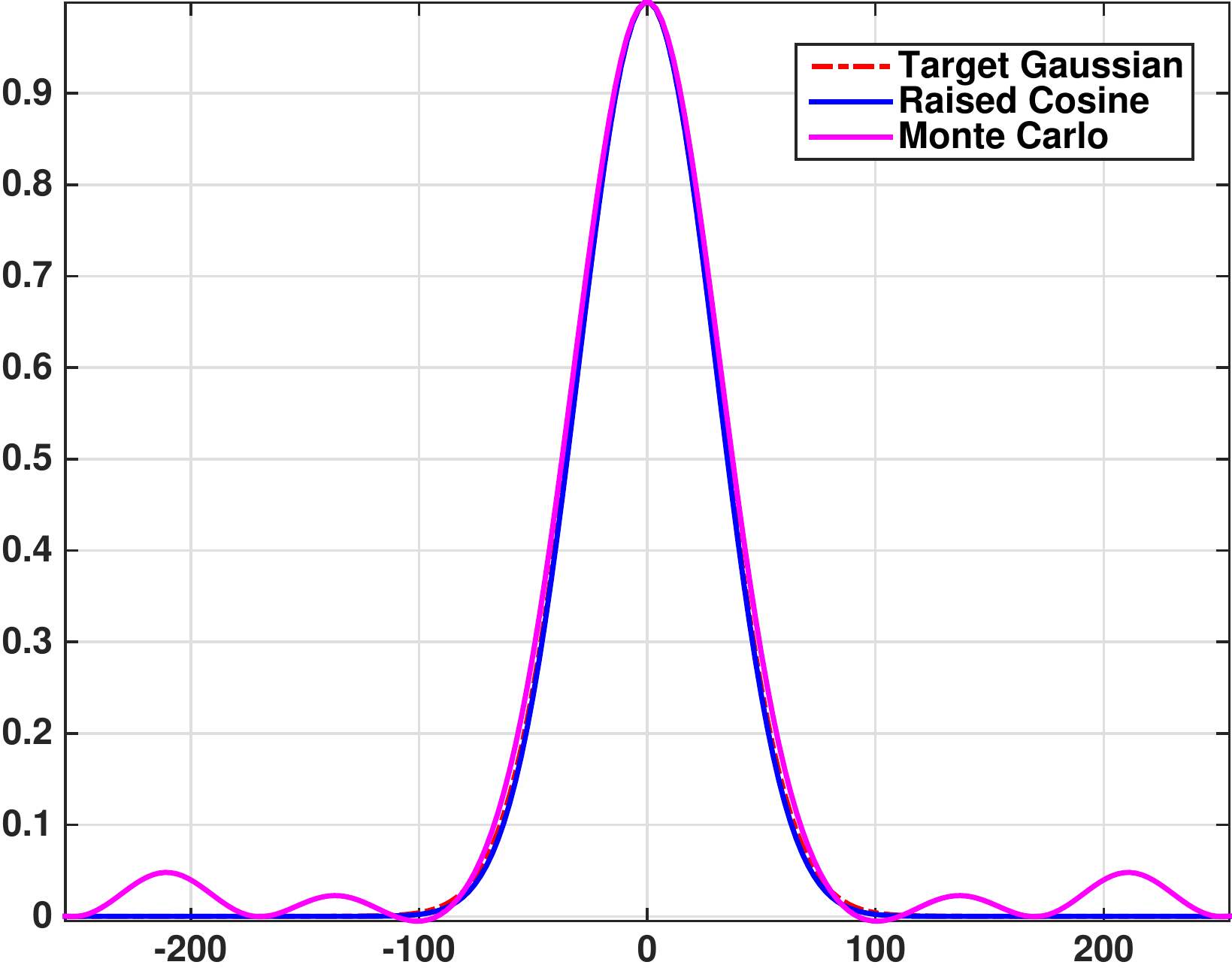}
  \caption{Approximation of the range kernel $\psi(t)=\exp (- \alpha^2  t^2/2)$ over the dynamic range $[-255,255]$, where $\alpha=1/30$. The raised cosine is of order $N=20$, and $T=200$ for the Monte Carlo approximation (obtained using a single realization of $X_1$).} 
  \label{kernelApproximation} 
\end{figure}

\section{Proposed Approximation} 
\label{PA}

As a first step, we diagonalize the quadratic form in the definition of $\phi(\x)$ using an orthogonal transform. Indeed, since $\mathbf{C}^{-1}$ is positive definite, we can find an orthogonal matrix $\mathbf{Q} \in \mathbb{R}^{d \times d}$ and positive numbers $\alpha_1,\ldots,\alpha_d$ such that
\begin{equation}
\label{COB}
 \x^T \mathbf{C}^{-1} \x = \sum_{k=1}^d \alpha^2_k  y_k^2 \qquad \quad (\y = \mathbf{Q}^T\x),
\end{equation}
where $\y=(y_1,\ldots,y_d)$. The change-of-basis $\y = \mathbf{Q}^T\x$ amounts to transforming the components of the vector-valued image at each pixel (similar to a color transformation). In particular, by defining the image $\g : \Omega \rightarrow \mathbb{R}^d$ given by
\begin{equation*}
\g(\i) = \mathbf{Q}^T\! \f(\i),
\end{equation*}
we can write \eqref{BF} as
\begin{equation}
 \label{redifBF}
 \mathcal{B}[\f](\i) = \frac{1}{Z(\i)} \sum_{\j} \omega (\j)  \ \psi \!\left( \g(\i-\j) - \g(\i) \right) \f(\i-\j),
\end{equation}
where 
\begin{equation}
\label{redifnormal}
Z(\i)=\sum_{\j} \omega (\j) \ \psi \! \left( \g(\i-\j) - \g(\i) \right).
\end{equation}
Comparing \eqref{COB} and the range kernel in \eqref{kernel}, we see that  $\psi : \mathbb{R}^d \rightarrow \mathbb{R}$ is given by
\begin{equation}
\label{modkernel}
\psi(\y) = \exp\! \left(- \frac{1}{2} \sum_{k=1}^d \alpha^2_k  y_k^2 \right) = \prod_{k=1}^d \exp\! \left(- \frac{1}{2} \alpha^2_k  y_k^2 \right).
\end{equation}
The key point here is that we have factored the original kernel $\phi(\x)$ into a product of one-dimensional Gaussians. At this point, we recall the following result from \cite{Chaudhury2011}.
\begin{proposition}[Raised-Cosine Approximation]
\label{prop1}
\begin{equation}
\label{RCapprox}
\lim_{N \rightarrow \infty} \ \left[\cos\!\left (\frac{\alpha t}{\sqrt{N}}\right)\right]^N\! = \exp \!\left(- \frac{1}{2} \alpha^2  t^2 \right).
\end{equation}
\end{proposition}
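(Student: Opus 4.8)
The plan is to take logarithms and reduce the claim to a routine Taylor-expansion computation. Fix $\alpha$ and $t$. For $N$ large enough the argument $\alpha t/\sqrt{N}$ lies in $(-\pi/2,\pi/2)$, so $\cos(\alpha t/\sqrt{N}) > 0$ and the quantity $L_N := N \log \cos\!\big(\alpha t/\sqrt{N}\big)$ is well-defined. It then suffices to show that $L_N \to -\tfrac12 \alpha^2 t^2$, because the claimed limit follows immediately by continuity of the exponential applied to $\big[\cos(\alpha t/\sqrt{N})\big]^N = \exp(L_N)$.

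To evaluate $L_N$, I would substitute the Taylor expansion $\cos u = 1 - \tfrac12 u^2 + O(u^4)$ with $u = \alpha t/\sqrt{N}$, which gives $\cos(\alpha t/\sqrt{N}) = 1 - \tfrac{\alpha^2 t^2}{2N} + O(N^{-2})$. Applying $\log(1+x) = x + O(x^2)$ with $x = -\tfrac{\alpha^2 t^2}{2N} + O(N^{-2})$ yields $\log \cos(\alpha t/\sqrt{N}) = -\tfrac{\alpha^2 t^2}{2N} + O(N^{-2})$, and multiplying through by $N$ gives $L_N = -\tfrac12 \alpha^2 t^2 + O(N^{-1}) \to -\tfrac12 \alpha^2 t^2$, as required.

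A variant that avoids logarithms entirely is a sandwich argument: for $|u|$ small one has the elementary two-sided bound $1 - \tfrac12 u^2 \le \cos u \le 1 - \tfrac12 u^2 + \tfrac{1}{24} u^4$; raising to the $N$-th power with $u = \alpha t/\sqrt{N}$ and invoking $\lim_{N\to\infty}\big(1 + c/N + o(1/N)\big)^N = e^{c}$ on both the lower and upper bounds pins the limit to $e^{-\alpha^2 t^2/2}$. In either route there is no substantive obstacle; the only points deserving a line of care are that the cosine is positive for large $N$ (so that taking a logarithm, or raising a positive base to a real power, is legitimate) and that the $O(\cdot)$ error terms produced by the two successive expansions genuinely combine to something of order $N^{-1}$ after multiplication by $N$. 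Both are straightforward to verify.
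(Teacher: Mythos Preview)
Your proof is correct; both the logarithm-plus-Taylor route and the sandwich variant establish the limit without difficulty. Note, however, that the paper does not supply its own proof of this proposition: it is simply recalled from \cite{Chaudhury2011} and then used as a black box in the subsequent development, so there is no in-paper argument to compare against.
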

We apply Proposition \ref{prop1} to the one-dimensional Gaussians in \eqref{modkernel}, and conclude that 
\begin{equation*}
\psi(\y) =   \prod_{k=1}^d  \lim_{N \rightarrow \infty}  \left[\cos\!\left (\frac{\alpha_k y_k}{\sqrt{N}}\right)\right]^N \! \! \!=\lim_{N \rightarrow \infty}  \prod_{k=1}^d \left[\cos\!\left (\frac{\alpha_k y_k}{\sqrt{N}}\right)\right]^N\!\!.
\end{equation*}
In practice, we fix some $N$, and replace \eqref{modkernel} with 
\begin{equation}
\label{approx}
\psi(\y) =  \prod_{k=1}^d \left[\cos\!\left (\frac{\alpha_k y_k}{\sqrt{N}}\right)\right]^N.
\end{equation}
To reduce unnecessary symbols, we have used $\psi(\y)$ to represent both the original kernel and its approximation. This should not lead to a confusion, since we will not use the original kernel in the rest of the discussion. We will refer to $N$ as the \textit{order} of the approximation.

The central observation of the paper is the following stochastic interpretation of \eqref{approx}, which is obtained by turning a product-of-sums into a sum-of-products.  
\begin{proposition} 
\label{prop2}
Let $\X=(X_1,\ldots,X_d)$ be a random vector, whose components are independent and follow the binomial distribution $B(N,1/2)$.  Then
\begin{equation}
\label{MCinterp}
\psi(\y) = \mathbb{E}  \left\{  \prod_{k=1}^d \exp \!\big(\! \iota (N-2X_k) \gamma_k y_k \big)  \right\},
\end{equation}
where $\iota=\sqrt{-1}, \gamma_k=\alpha_k/ \sqrt{N},$ and the expectation $\mathbb{E}[\cdot]$ is with respect to $\X$.
\end{proposition}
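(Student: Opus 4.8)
The plan is to establish the one-dimensional identity $[\cos\theta]^N = \mathbb{E}\{\exp(\iota(N-2X)\theta)\}$ for a single binomial variable $X \sim B(N,1/2)$, and then to assemble the $d$-dimensional statement \eqref{MCinterp} from the product structure of \eqref{approx} together with the independence of $X_1,\ldots,X_d$.

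First I would recall from \eqref{approx} that $\psi(\y) = \prod_{k=1}^d [\cos(\gamma_k y_k)]^N$ with $\gamma_k = \alpha_k/\sqrt{N}$. Fixing one factor, I would write $\cos\theta = \tfrac{1}{2}(e^{\iota\theta} + e^{-\iota\theta})$ and apply the binomial theorem:
\begin{equation*}
[\cos\theta]^N = \frac{1}{2^N}\sum_{m=0}^N \binom{N}{m}\, e^{\iota m \theta}\, e^{-\iota(N-m)\theta} = \sum_{m=0}^N \frac{1}{2^N}\binom{N}{m}\, e^{\iota(2m-N)\theta}.
\end{equation*}
Recognizing $2^{-N}\binom{N}{m}$ as the probability mass function of the $B(N,1/2)$ law, the right-hand side is exactly $\mathbb{E}\{e^{\iota(2X-N)\theta}\}$; and since $N-X$ has the same distribution as $X$, this also equals $\mathbb{E}\{e^{\iota(N-2X)\theta}\}$. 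Equivalently, one may invoke the probability generating function $\mathbb{E}\{z^X\} = ((1+z)/2)^N$ evaluated at $z = e^{-2\iota\theta}$, which yields the identity directly without the symmetry remark.

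It then remains to take the product over $k = 1,\ldots,d$ with $\theta = \gamma_k y_k$. I would expand the product of the $d$ finite sums into a single sum over the multi-index $(m_1,\ldots,m_d) \in \{0,\ldots,N\}^d$, and observe that $\prod_{k=1}^d 2^{-N}\binom{N}{m_k}$ is precisely the joint mass function of $\X = (X_1,\ldots,X_d)$, owing to the independence of the components. Folding this sum back into an expectation gives $\psi(\y) = \mathbb{E}\{\prod_{k=1}^d e^{\iota(N-2X_k)\gamma_k y_k}\}$, which is \eqref{MCinterp}. This is literally the ``product-of-sums into a sum-of-products'' manoeuvre mentioned before the statement.

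There is no genuine analytic difficulty: every sum is finite, so exchanging summation and expectation is automatic, and no convergence issue arises. The one point that warrants a moment's care is the passage from the product of the one-dimensional expansions to a single expectation over the joint law, i.e.\ identifying the product of marginal binomial weights with the joint pmf of $\X$ --- this is exactly where the assumed independence of $X_1,\ldots,X_d$ is used. The sign convention ($N-2X_k$ versus $2X_k-N$) is immaterial since $B(N,1/2)$ is symmetric about $N/2$.
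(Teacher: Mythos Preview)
Your proof is correct and follows essentially the same route as the paper: expand $(\cos\theta)^N$ via Euler's formula and the binomial theorem, recognize the binomial weights as the pmf of $B(N,1/2)$, and then pass to the product over $k$ using independence. The only cosmetic difference is that the paper obtains the exponent $\iota(N-2n)\theta$ directly from the binomial expansion (by grouping the terms the other way), whereas you first arrive at $\iota(2m-N)\theta$ and then invoke the symmetry $X \overset{d}{=} N-X$; both are equally valid.
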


\begin{proof}
Recall the identity $\cos \theta= (e^{\iota \theta} + e^{-\iota \theta})/2$, where $\iota =\sqrt{-1}$. We use this along with the binomial theorem, and get
\begin{align}
\label{expansion}
\left(\cos \theta \right)^N & = \frac{1}{2^N}\left(\exp(\iota \theta) +\exp(-\iota \theta) \right)^N \nonumber \\ 
& = \sum_{n=0}^N \frac{1}{2^N} \binom{N}{n} \exp \!\left( \iota (N-2n) \theta \right). 
\end{align}
Notice that the coefficient in \eqref{expansion} corresponding to a given $n$ is simply the probability that a random variable $X \sim B(N,1/2)$ takes on the value $n$. In other words, we can write
\begin{equation}
\label{exp}
\left(\cos \theta \right)^N = \mathbb{E} \Big\{ \exp \!\left( \iota (N-2X) \theta \right) \Big\},
\end{equation}
where $\mathbb{E}[\cdot]$ denotes the mathematical expectation with respect to $X$. Substituting \eqref{exp} in \eqref{approx}, and using the independence assumption on the components of $\X$, we arrive at \eqref{MCinterp}.
\end{proof}

\begin{figure*}
\centering
\subfloat[\textit{Dome} ($876 \times 584$).]{\includegraphics[width=0.33\linewidth]{./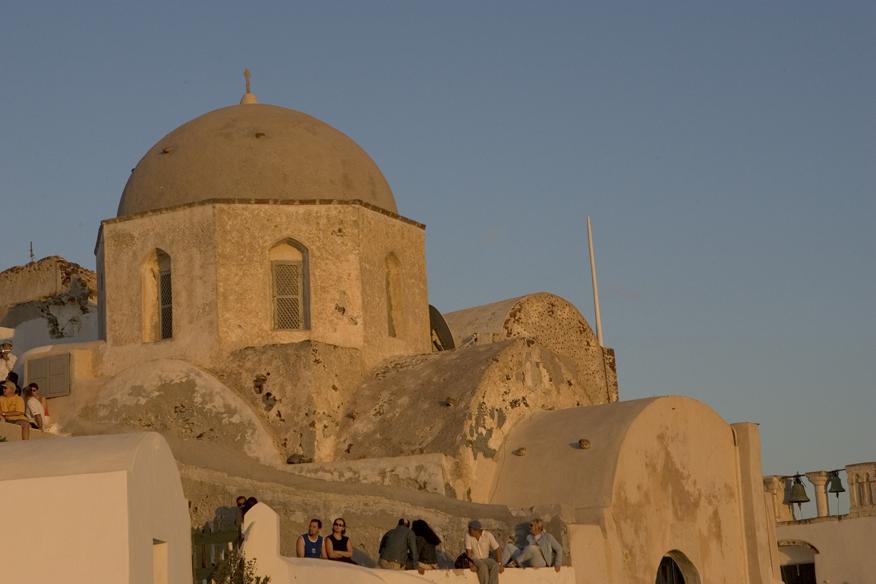}} \hspace{0.1mm} 
\subfloat[Direct, $75$ min.]{\includegraphics[width=0.33\linewidth]{./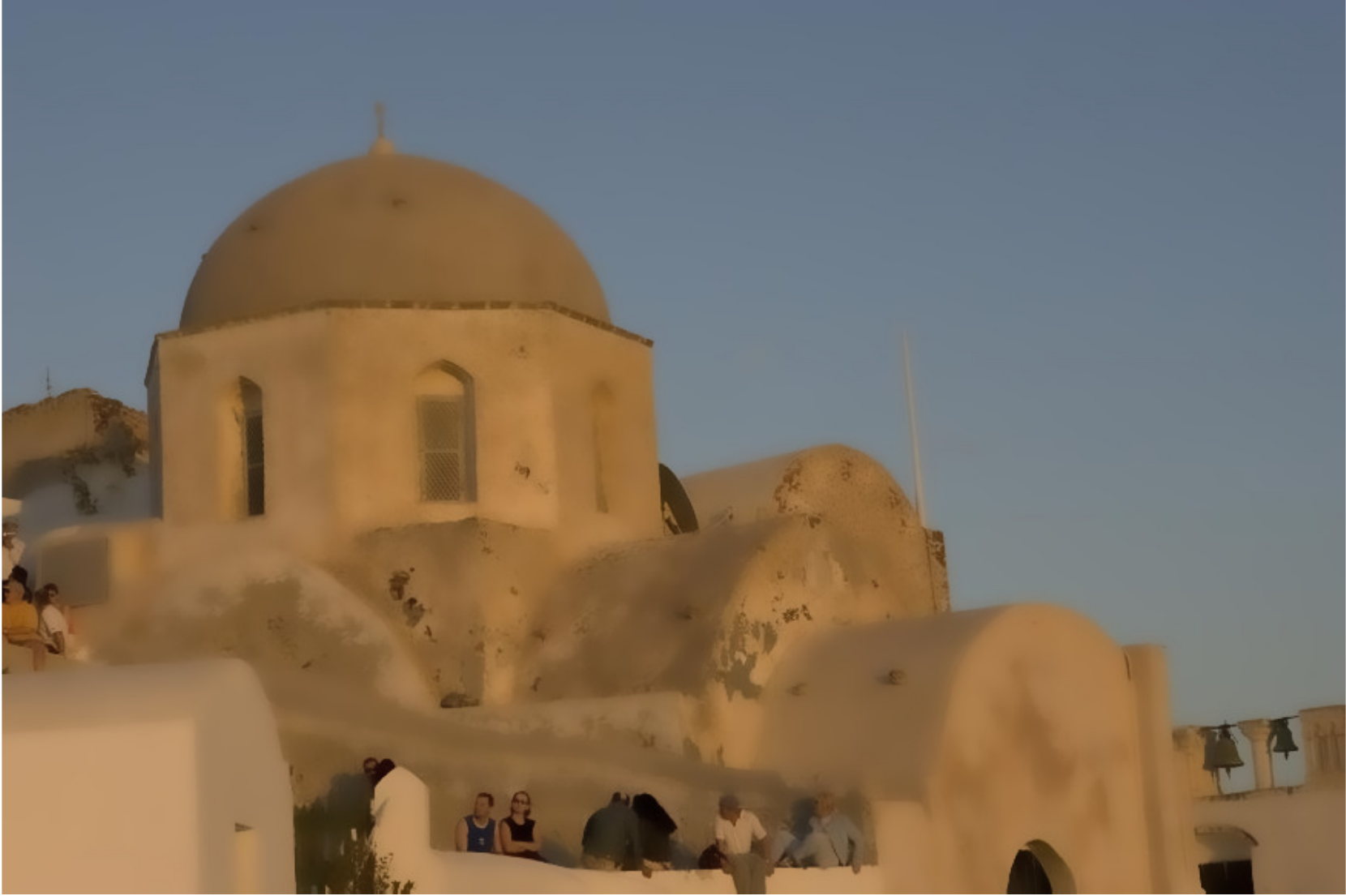}} 
\hspace{0.1mm} 
\subfloat[\texttt{MCSF}, $24$ sec.]{\includegraphics[width=0.33\linewidth]{./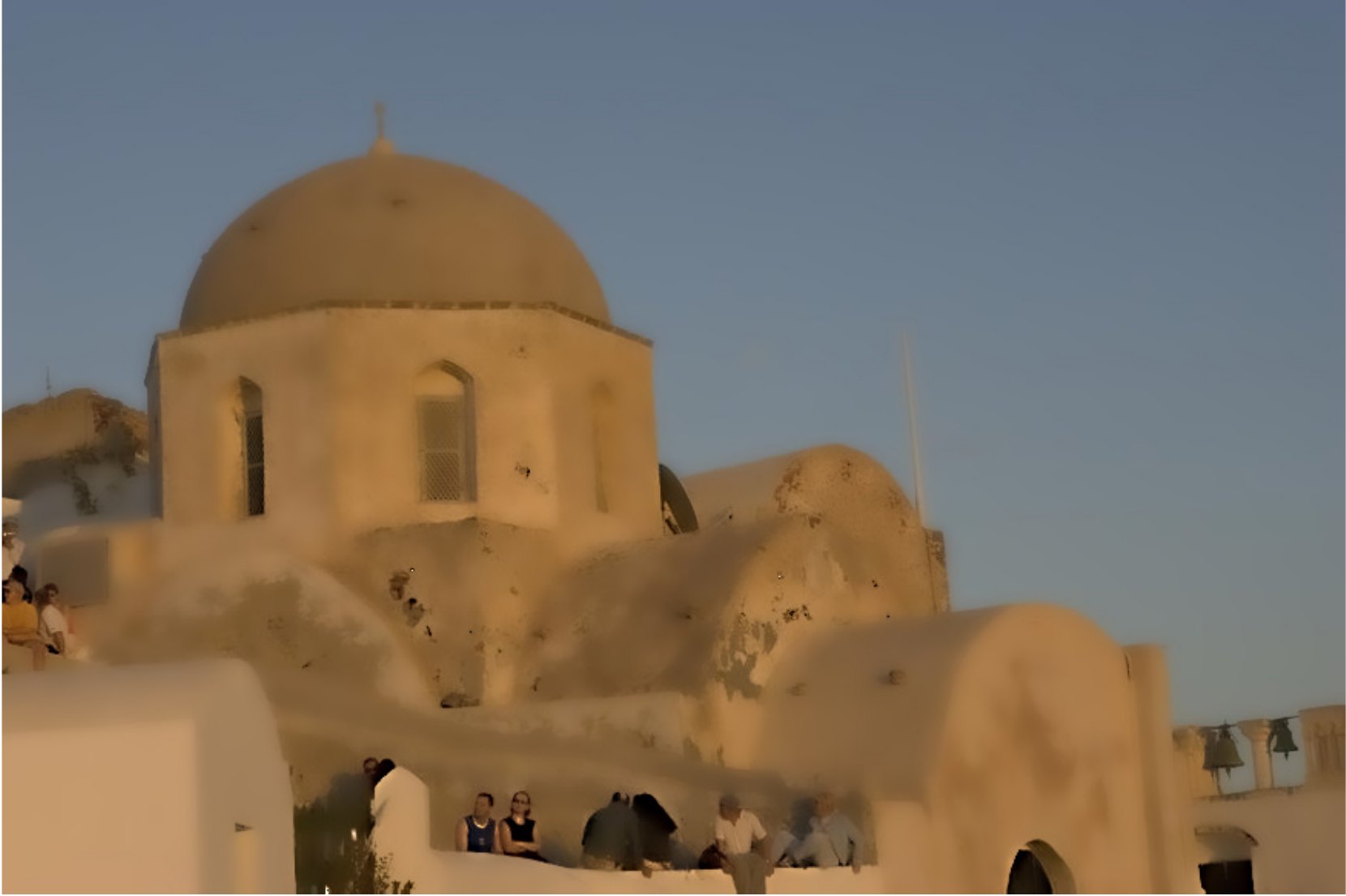}} 
\caption{Results  for the \emph{Dome} image \cite{Paris2006} at $\sigma_s=5$ and $\sigma_r = 50$. The parameters for \texttt{MCSF} are $N = 10$ and $T=200$. The MSE is  $1.86$ dB.}
\label{Dome}
\end{figure*}

We next approximate the mathematical expectation in \eqref{MCinterp} using Monte Carlo integration \cite{DW2005}. More specifically, let $F_{\X}$ denote the distribution of the random vector $\mathbf{X}$ in Proposition \ref{prop2} that takes values in $\{0,1,\ldots,N\}^d$. We fix a certain number of trials $T$, and draw $\X^{(1)},\ldots,\X^{(T)}$ from $\{0,1,\ldots,N\}^d$ using the distribution $F_{\X}$. This gives us the following Monte Carlo approximation of \eqref{MCinterp}:
\begin{equation}
\label{finalApprox}
 \psi(\y)=\frac{1}{T} \sum_{t=1}^T \left\{  \prod_{k=1}^d \exp \!\big(\! \iota (N-2X^{(t)}_k) \gamma_k y_k \big)  \right\},
\end{equation}
where $X^{(t)}_k$ is the $k$-th component of $\X^{(t)}$. To reduce symbols, we have again overloaded the notation for the original kernel in \eqref{modkernel}.

In summary, starting from the kernel in \eqref{modkernel}, we arrive at the approximation in \eqref{finalApprox} in two steps. First, we approximate the one-dimensional Gaussian using a raised-cosine of fixed order $N$. This results in a product of raised-cosines, which we express as a large sum-of-products. In the second step, we approximate the large sum using Monte Carlo integration. A comparison of the raised-cosine and the Monte Carlo approximations for a one-dimensional ($d=1$) range kernel is provided in Figure \ref{kernelApproximation}.

Before proceeding further, we simplify the expression in \eqref{finalApprox} by defining the random variable $\Y=(Y_1,\ldots,Y_d)$ given by
\begin{equation*}
Y_k = N-2X_k \qquad \ (k=1,\ldots,d).
\end{equation*}
In terms of the realizations $\Y^{(1)},\ldots,\Y^{(T)}$, we can write  \eqref{finalApprox}  as
\begin{equation}
\label{finalApproxY}
 \psi(\y)=\frac{1}{T} \sum_{t=1}^T \left\{  \prod_{k=1}^d \exp \!\big( \iota Y^{(t)}_k \gamma_k y_k \big)  \right\}.
\end{equation}

\section{Fast Algorithm} 
\label{FA}

We now present the fast algorithm obtained by using \eqref{finalApproxY} in place of the original Gaussian kernel. The fast algorithm is based on the so-called \textit{shiftability} property of a function \cite{Chaudhury2013}. The shiftability of the exponential function follows from a rather simple fact, namely that $\exp(a+b)=\exp(a) \exp(b)$. In particular, on substituting \eqref{finalApproxY} in \eqref{redifBF}, we can express the term $\psi \!\left( \g(\i-\j) - \g(\i) \right)$ as follows:
\begin{equation}
\label{bigExp}
\frac{1}{T} \sum_{t=1}^T  \Big\{ \! \prod_{k=1}^d \exp \! \big(\!- \iota Y^{(t)}_k \gamma_k  g_k(\i)\big) \exp \! \big( \iota Y^{(t)}_k \gamma_k  g_k(\i-\j)\big)  \Big\}
\end{equation}
where $g_k(\i)$ denotes the $k$-th component of $\g(\i)$. We can simplify \eqref{bigExp} by introducing the (complex-valued) image $h_{k,t} : \Omega \rightarrow \mathbb{C}$ given by
\begin{equation*}
h_{k,t}(\i) = \exp ( \iota Y^{(t)}_k \gamma_k  g_k(\i)),
\end{equation*}
and the image $H_t : \Omega \rightarrow \mathbb{C}$ given by
\begin{equation}
\label{Ht}
H_t(\i) =\prod_{k=1}^d  h_{k,t}(\i).
\end{equation}
In term of \eqref{Ht}, note that we can write \eqref{bigExp} as
\begin{equation}
\label{smallExp}
\frac{1}{T} \sum_{t=1}^T H_t(\i)^{\star} H_t(\i-\j),
\end{equation}
where $H_t(\i)^{\star}$ denotes the complex-conjugate of $H_t(\i)$. Substituting \eqref{smallExp} in \eqref{redifBF}, and exchanging the two sums, we obtain
 \begin{align}
 \label{numerator}
& \sum_{\j} \omega (\j)  \ \psi \!\left( \g(\i-\j) - \g(i) \right) \f(\i-\j) \nonumber \\
 & = \frac{1}{T}  \sum_{\j}  \omega(\j) \Big\{ \sum_{t=1}^T H_t(\i)^{\star} H_t(\i-\j) \Big\} \f(\i-\j) \nonumber \\
 & =  \frac{1}{T} \sum_{t=1}^T   H_t(\i)^{\star}  \Big\{ \sum_{\j}  \omega(\j) \mathbf{G}_t (\i-\j)\Big\},
\end{align}
 where the vector-valued image $\mathbf{G}_t  : \Omega \rightarrow \mathbb{C}^d$ is given by $\mathbf{G}_t (\i) = H_t(\i) \f(\i)$. Similarly, on substituting \eqref{finalApproxY} in \eqref{redifnormal}, we obtain
\begin{equation}
\label{denom}
Z(\i)=\frac{1}{T} \sum_{t=1}^T   H_t(\i)^{\star}  \Big\{ \sum_{\j}  \omega(\j) H_t (\i-\j)\Big\}.
\end{equation}
Notice that we have managed to express \eqref{redifBF} and \eqref{redifnormal} using a series of Gaussian convolutions. Indeed, the term within brackets in \eqref{denom} is a Gaussian convolution, which we denote by $\omega \ast H_t$. On the other hand, the  Gaussian convolution in \eqref{numerator} is performed on each of the $d$ components of $\mathbf{G}_t$; we denote this using $\omega \ast {\mathbf{G}}_t$. Thus, we are required to perform a total of $T(d+1)$ Gaussian convolutions, which constitutes the bulk of the computation. The procedure for computing the proposed approximation of \eqref{BF} is summarized in Algorithm \ref{algo}. We shall henceforth refer to the proposed algorithm as the \texttt{MCSF} (Monte Carlo Shiftable Filter). 
Note that we can efficiently compute the Gaussian convolutions in step \ref{conv} using separability and recursion, and importantly, at $O(1)$ cost with respect to $\sigma_s$ \cite{Deriche1993}. The run-time of the proposed algorithm is thus independent of $\sigma_s$, and is hence expected to be much faster than the direct implementation for large $\sigma_s$.
Needless to mention, we can replace the spatial filter $\omega(\i)$ in \eqref{kernel} with any arbitrary spatial filter (such as a box or hat filter), and the above reductions would still hold. 

\begin{algorithm}
\KwData{Image $\f : \Omega \rightarrow \mathbb{R}^d$, and parameters $\sigma_s,\mathbf{C},N$, and $T$.}
\KwResult{Shiftable approximation of \eqref{BF} denoted by $\mathcal{S}[\f]$.}
Diagonalize $\mathbf{C}$ to get $\mathbf{Q}$ and $\alpha_1,\ldots,\alpha_d$\;
Set up the spatial filter $\omega(\i)$ in \eqref{kernel} using $\sigma_s$\;
\For{$k =1,2,3$}{
$\gamma_k=\alpha_k/\sqrt{N}$\;
}
\For{$\i \in \Omega$}{
Set $\g(\i)= \mathbf{Q}^T \!\f(\i)$\;
Set $\mathbf{P}(\i) = \mathbf{0}$\;
Set $Z(\i) = 0$\;
}
\For{$t=1,\ldots,T$}{
Draw $\X$ from $B(N,1/2)^d$\;
Set $H(\i) = 1$ for $\i \in \Omega$\;
\For{$k=1,\ldots,d$}{
$ H(\i) = H(\i)\exp\!\big ( \iota (N-2X_k) \gamma_k  g_k(\i)\big)$\;
}
Set $\mathbf{G}(\i) = H(\i) \f(\i)$ for $\i \in \Omega$\;
Compute $\bar{\mathbf{G}} = \omega \ast \mathbf{G}$ and $\bar{H}=\omega \ast H$\; \label{conv}
Set $\mathbf{P}(\i) = \mathbf{P}(\i) +H(\i)^{\star} \bar{\mathbf{G}}(\i)  $ for $\i \in \Omega$\;
Set $Z(\i) = Z(\i) + H(\i)^{\star} \bar{H}(\i)  $ for $\i \in \Omega$\;
}
Set $\mathcal{S}[\f](\i)= Z[\i]^{-1} \mathbf{P}(\i)$ for $\i \in \Omega$. \label{divide}
\caption{Monte Carlo Shiftable Filter (\texttt{MCSF}).}
\label{algo}
\end{algorithm}

\begin{figure}
\centering
\subfloat[\textit{Peppers} \cite{BM3Dimages}.]{\includegraphics[width=0.32\linewidth]{./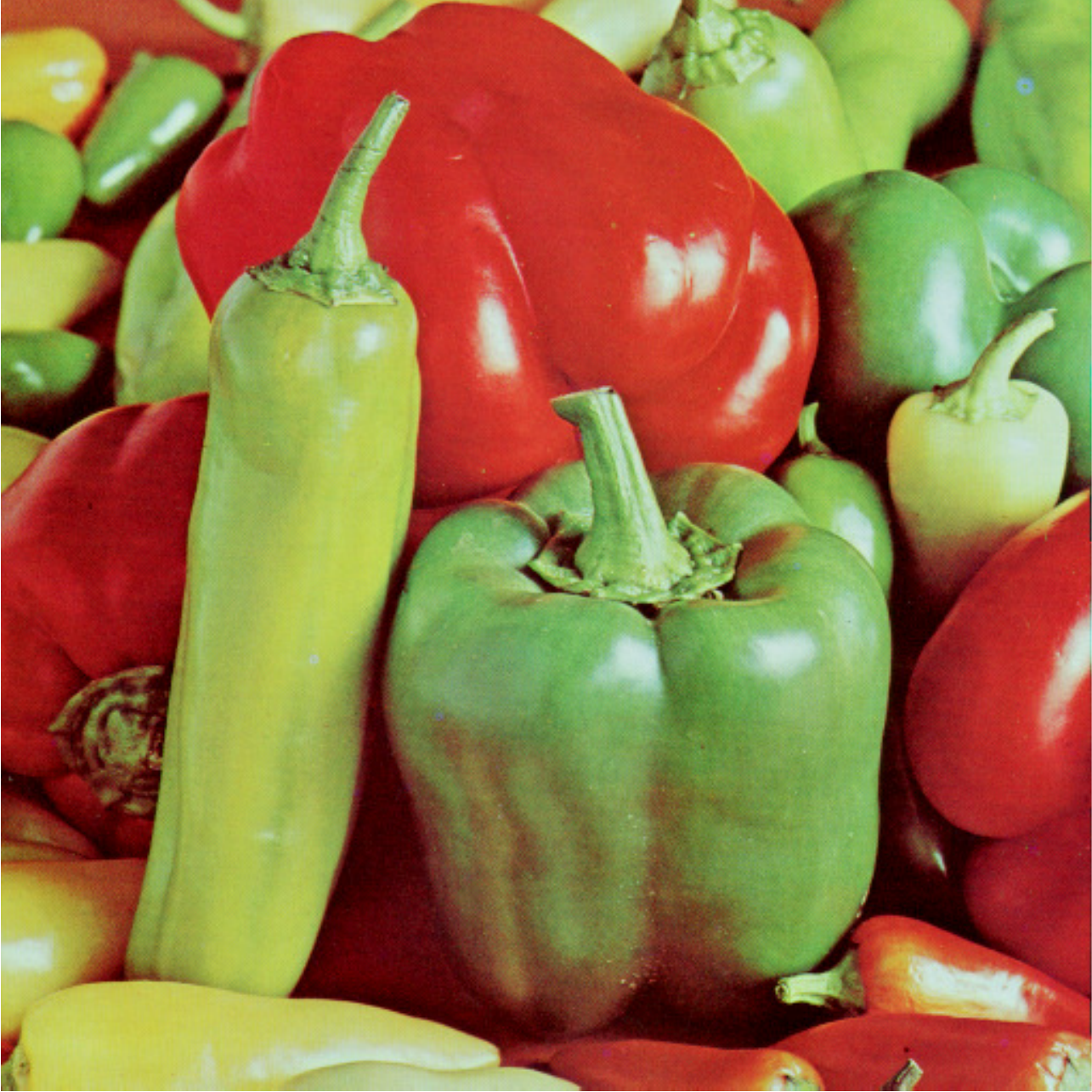}} \hspace{0.2mm} 
\subfloat[Direct, $38$ min.]{\includegraphics[width=0.32\linewidth]{./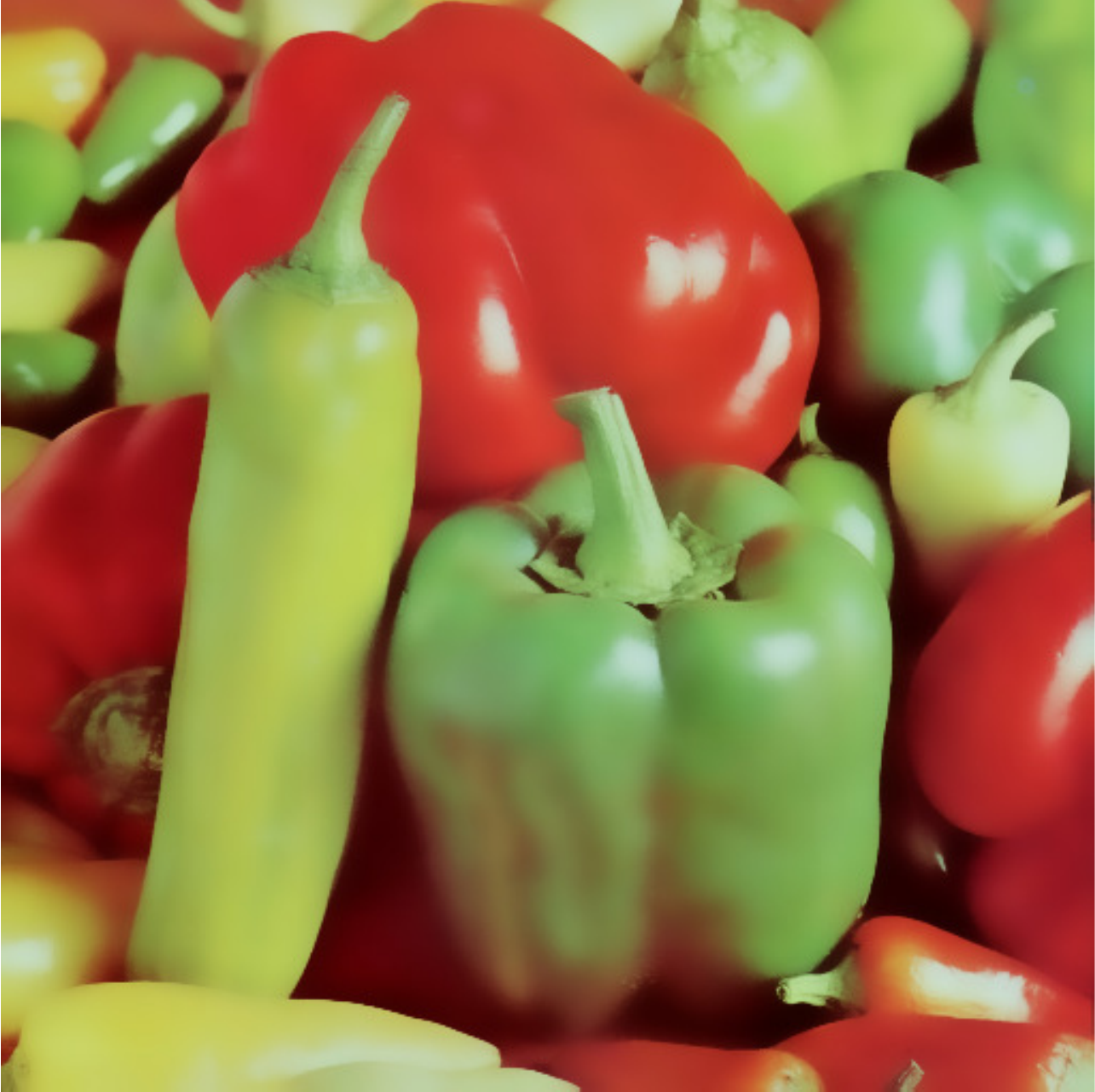}} \hspace{0.2mm}
\subfloat[\texttt{MCSF}, $16$ sec.]{\includegraphics[width=0.32\linewidth]{./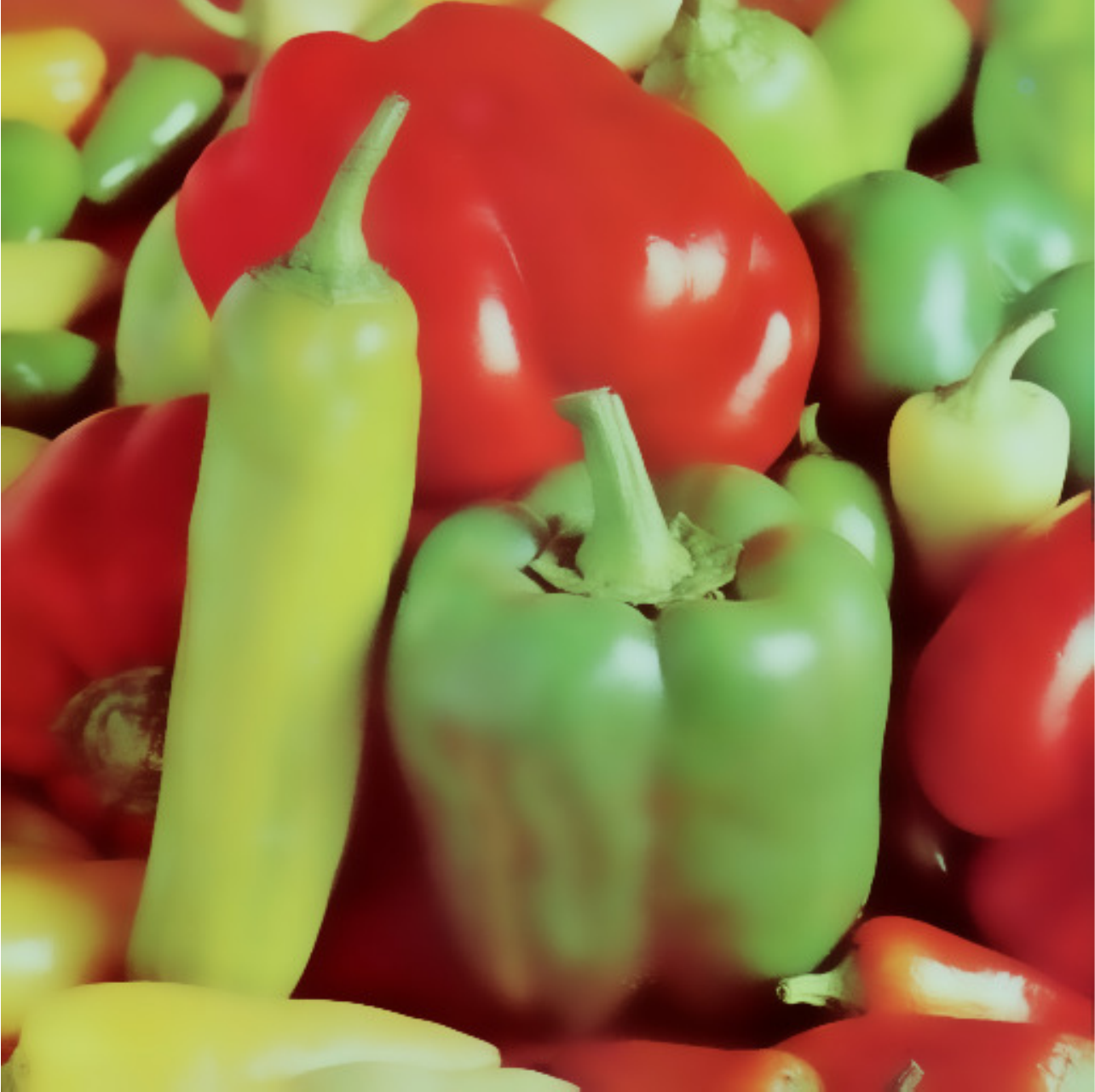}} 
\caption{Results for the $512 \times 512$ \emph{Peppers} image at $\sigma_s=5$ and $\sigma_r = 80$. Parameters for \texttt{MCSF} are  $N = 10$ and $T=300$. The run-times are given in the caption. The MSE  between (b) and (c) is $0.34$ dB.}
\label{Peppers}
\end{figure}

\begin{figure}
\centering
\subfloat[\textit{House} \cite{BM3Dimages}.]{\includegraphics[width=0.322\linewidth]{./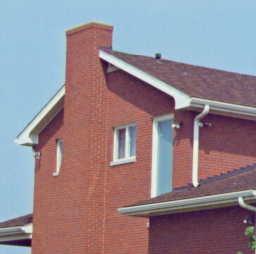}} \hspace{0.5mm} 
\subfloat[Direct, $8.5$ min.]{\includegraphics[width=0.32\linewidth]{./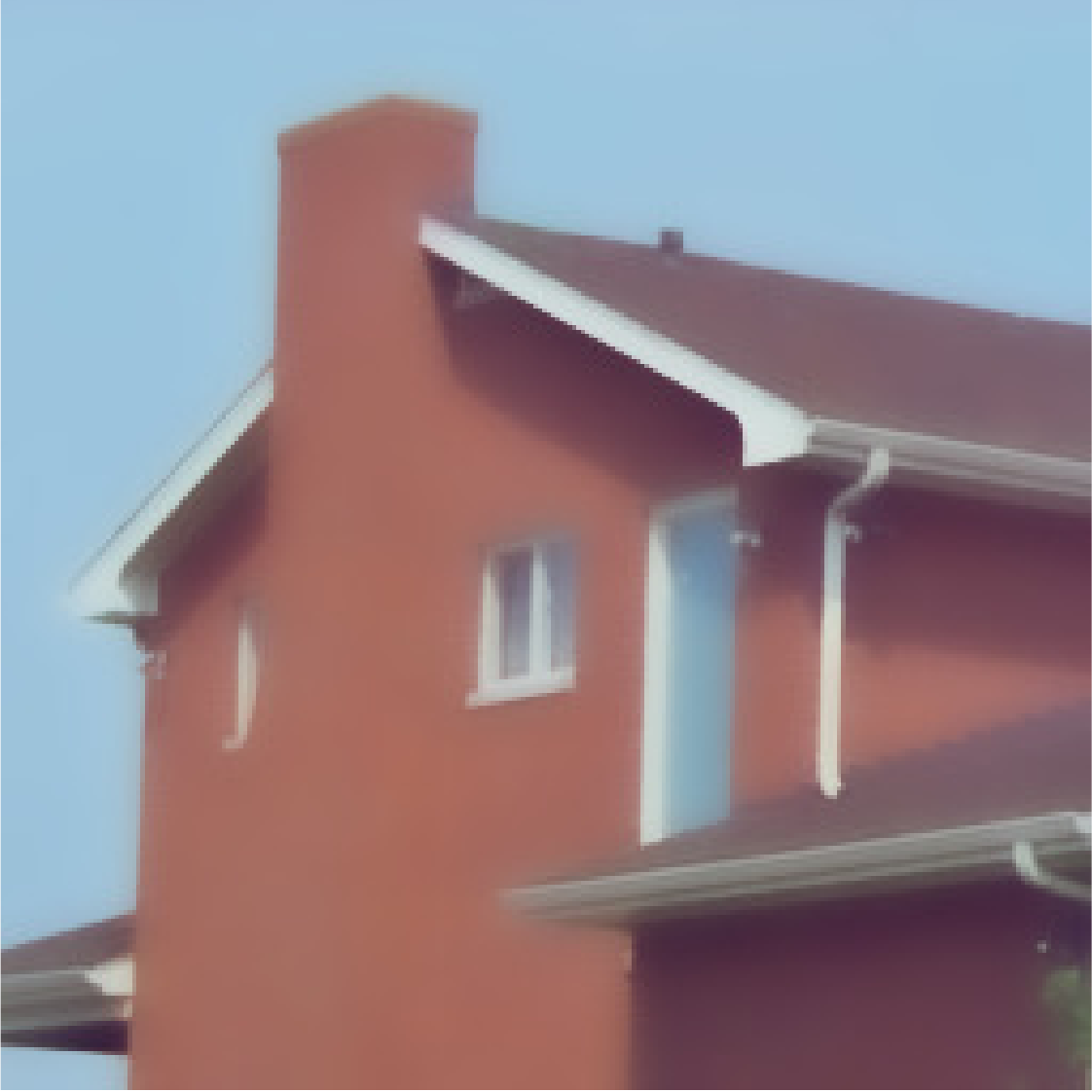}} 
\hspace{0.5mm} 
\subfloat[\texttt{MCSF}, $5$ sec.]{\includegraphics[width=0.32\linewidth]{./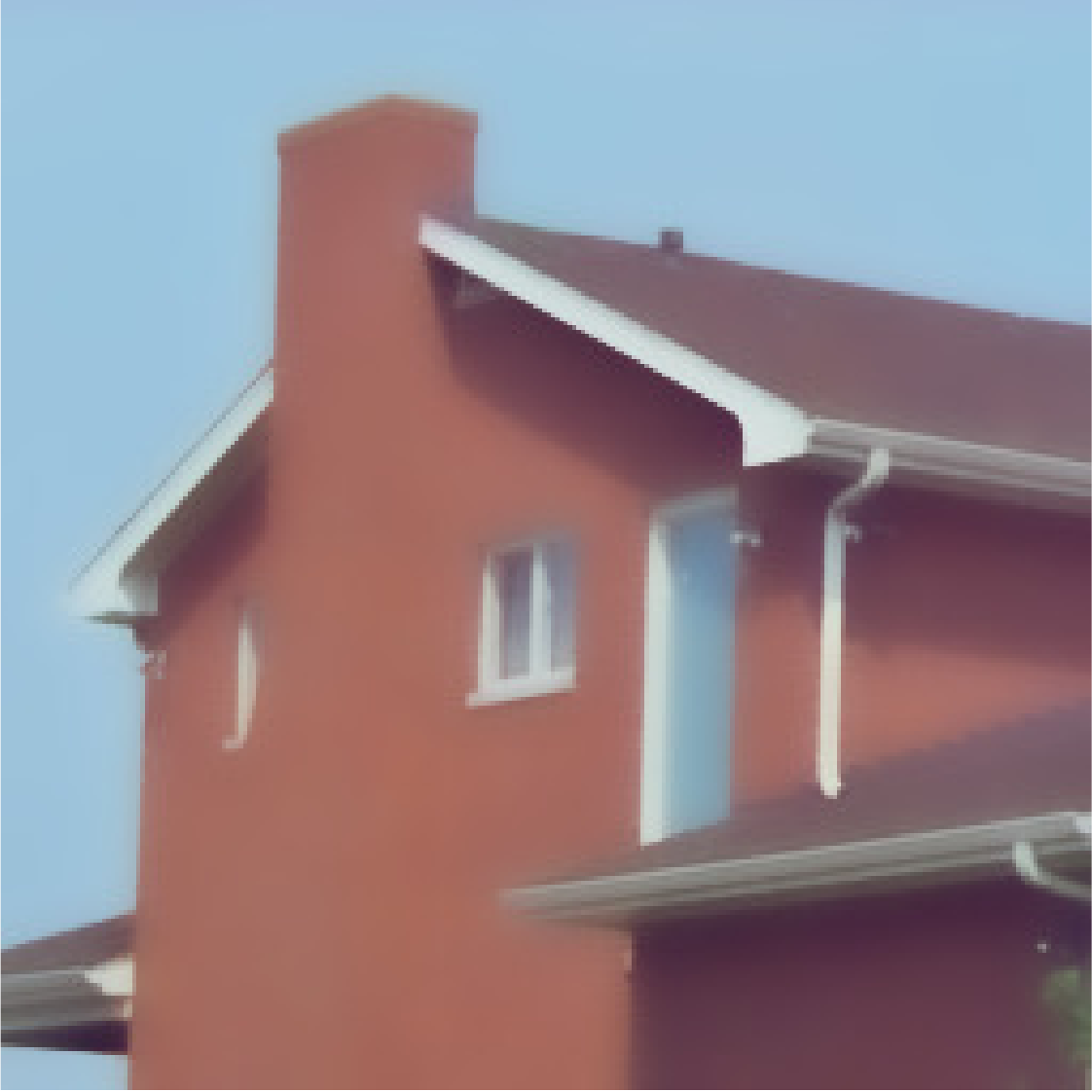}} 
\caption{Results for the $256 \times 256$ \emph{House} image at $\sigma_s=5$ and
$\sigma_r = 90$. The parameters for \texttt{MCSF} are $N = 10$ and $T=300$. The MSE between (b) and (c) is $-1.48$ dB.} 
\label{House}
\end{figure}

\section{Results for Color Images}
\label{results}

In this section, we present some results on natural color images for which $d=3$. 
In particular, we demonstrate that the proposed \texttt{MCSF} algorithm is both fast and accurate for color images in relation to the direct implementation.
To quantify the approximation accuracy for a given color image $\f : \Omega \rightarrow \mathbb{R}^3$, we used the mean-squared error between $\mathcal{B}[\f]$ and $\mathcal{S}[\f]$ (the latter is the output of Algorithm \ref{algo}) given by
\begin{equation}
\label{MSE}
 \text{MSE}=\frac{1}{3 | \Omega | } \sum_{k=1}^{3} \sum_{\i \in \Omega} \big(\mathcal{B}[\f]_k(\i)-\mathcal{S}[\f]_k(\i) \big)^2,
\end{equation}
where $\mathcal{B}[\f]_k$ and $\mathcal{S}[\f]_k$ are the $k$-th color channel. On a logarithmic scale, this corresponds to $10 \log_{10}(\text{MSE})$ dB. For the experiments reported in this paper, we used an isotropic Gaussian kernel corresponding to $\mathbf{C}= \sigma_r^2 \mathbf{I}$ in \eqref{kernel}. In other words, $\mathbf{Q}=\mathbf{I}$, and $\alpha_k=1/\sigma_r$ for $k=1,2,3$.

The accuracy of the proposed algorithm is controlled by the order of the raised-cosine ($N$) and the number of trails ($T$). 
It is clear that we can improve the approximation accuracy by increasing $N$ and $T$.
We illustrate this point with an example in Figure \ref{errorPlot}. We notice that \texttt{MCSF} can achieve sub-pixel accuracy when $N = 10$ and $T > 300$. 
We have noticed in our simulations that, for a fixed $T$, the accuracy tends to saturate beyond a certain $N$. This is demonstrated in Figure \ref{errorPlot} using $N=10$ and $N=20$.

\begin{figure}
\centering
\includegraphics[width=0.83 \linewidth]{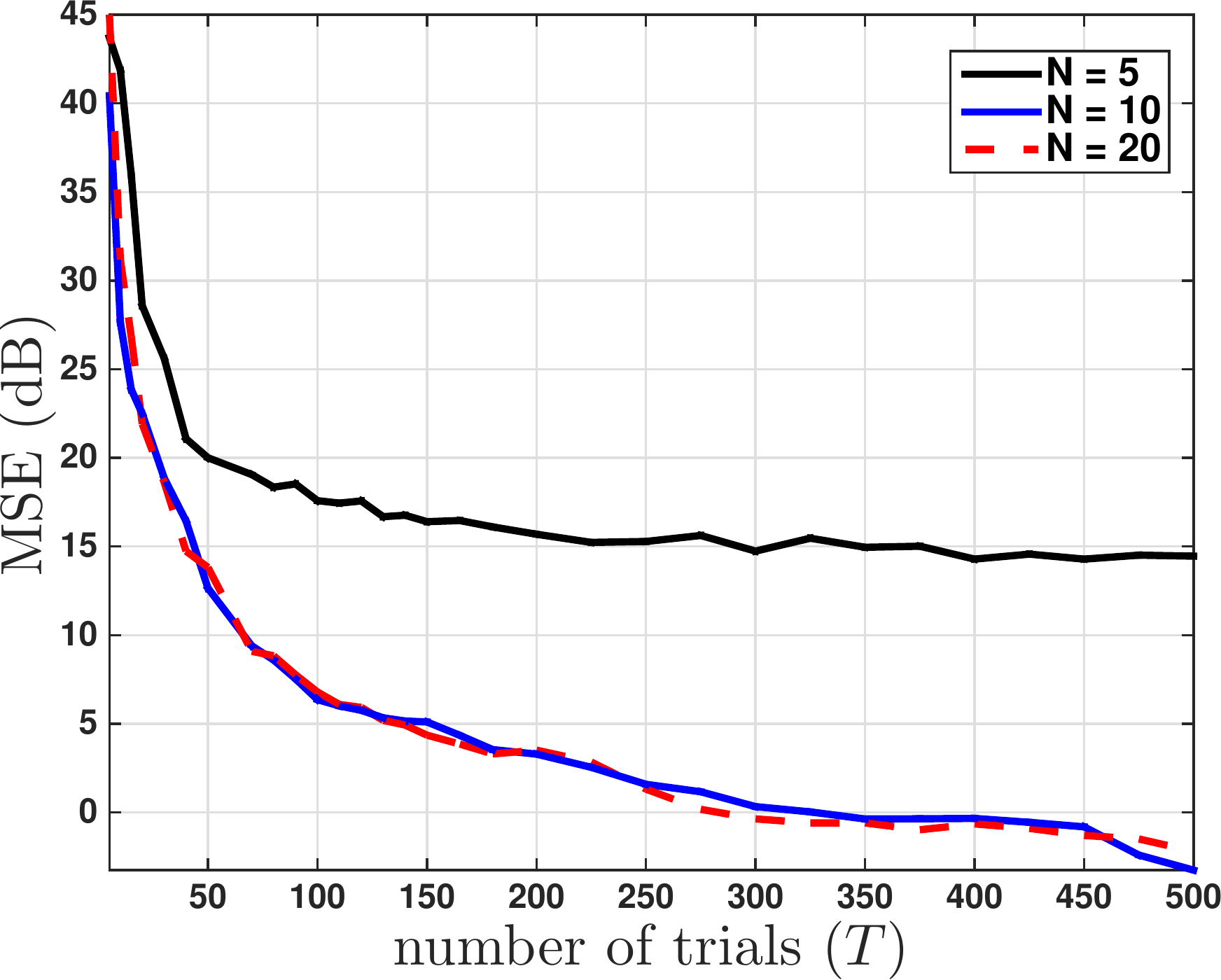} 
\caption{Plot of the MSE given by \eqref{MSE} as a function of the number of Monte Carlo trails. The \emph{Dome} image \cite{Paris2006} was used for this experiment, and the parameters are $\sigma_s = 1$ and $\sigma_r = 30$. For a fixed $N$ and $T$, the MSE was averaged over $400$ Monte Carlo realizations.}
\label{errorPlot}
\end{figure}

\begin{table}
\setlength{\tabcolsep}{4.2pt}
\caption{Comparison of the run-time of the direct implementation and Algorithm \ref{algo} on the $512 \times 512$ \emph{Peppers} image. The range parameter used  is $\sigma_r = 40$. The parameters of \texttt{MCSF} are $N = 10$ and $T = 300$. The computations were performed using Matlab $8.4$ on a $3.40$ GHz Intel $4$-core machine with $32$ GB memory.}  
\vspace{2mm}
\centering 
\begin{tabular}{|c| c c  c c c c |} 
\hline
Method $\backslash$ $\sigma_s$     & $1$    & $2$ & $3$  & $4$ & $5$ & $10$  \\
\hline
Direct        & 116s     & 6.6m   & 14.6m  & 24.1m    & 36.5m  & 141m \\
\hline
\texttt{MCSF}     & 16.9s  & 17.1s  & 17.2s  & 17.3s & 17.3s &  17.5s   \\
\hline
\end{tabular}
\label{table1}
\end{table}

A comparison of the run-time of the direct implementation of \eqref{BF} and that of the proposed algorithm is provided in Table \ref{table1}. We notice that \texttt{MCSF} is few orders faster than the direct implementation, particularly for large $\sigma_s$. Indeed, following the fact that the convolutions in step \ref{conv} of Algorithm \ref{algo} can be computed in constant-time with respect to $\sigma_s$ \cite{Deriche1993}, our algorithm has $O(1)$ complexity with respect to $\sigma_s$. 
As against this, the direct implementation scales as $O(\sigma_s^2)$.

Finally, we present a visual comparison of the filtering for RGB images in Figures \ref{Dome} and \ref{Peppers}. Notice that the outputs are visually indistinguishable. As mentioned earlier, 
the authors in \cite{Tomasi1998, Paris2006} have observed that the application of the bilateral filter in the RGB color space can lead to color leakage, particularly at the sharp edges. The suggested solution was to perform the filtering in the CIE-Lab space \cite{WS1982}. In this regard, a comparison of the filtering in the CIE-Lab space is provided in Figure \ref{House}. In this case, we first performed a color transformation from the RGB to the CIE-Lab space, performed the filtering in the CIE-Lab space, and then transformed back to the RGB space. The filtered outputs are seen to be close, both visually and in terms of the MSE.

\section{Conclusion}
\label{conclusion}

We proposed a fast algorithm for the bilateral filtering of vector-valued images. We applied the algorithm for filtering color images in the RGB and the CIE-Lab space. In particular, we demonstrated that a speedup of few orders can be achieved using the fast algorithm without introducing visible changes in the filter output (the latter fact was also quantified using the MSE). An important theoretical question arising from the work is the dependence of the order and the number of trials on the filtering accuracy. In future work, we will investigate this matter, and also look at various ways of improving the Monte Carlo integration \cite{DW2005}. We also plan to test the algorithm on other vector-valued images.

\bibliographystyle{IEEEbib}

\end{document}